\documentclass{article} 
\usepackage{iclr2026_conference,times}
\usepackage{algorithm, algorithmic, amsmath, amsthm, amssymb}

\newtheorem{theorem}{Theorem}[section] 
\newtheorem{lemma}[theorem]{Lemma}     

\newtheorem{assumption}[theorem]{Assumption}
\newtheorem{remark}[theorem]{Remark}

\usepackage{amsmath,amsfonts,bm}

\def\eqref#1{equation~\ref{#1}}

\def\1{\bm{1}}

\DeclareMathAlphabet{\mathsfit}{\encodingdefault}{\sfdefault}{m}{sl}
\SetMathAlphabet{\mathsfit}{bold}{\encodingdefault}{\sfdefault}{bx}{n}

\usepackage{hyperref}
\usepackage{url}
\usepackage{graphicx}
\usepackage{booktabs}

\title{When Sensors Fail: Temporal Sequence Models for Robust PPO under Sensor Drift}

\author{Kevin Vogt-Lowell, Theodoros Tsiligkaridis, Rodney Lafuente-Mercado, Surabhi Ghatti \\
MIT Lincoln Laboratory \\
\And
Shanghua Gao, Marinka Zitnik \\
Harvard \\
\And
Daniela Rus \\
MIT
}

\iclrfinalcopy 
\begin{document}

\maketitle

\begin{abstract}
Real-world reinforcement learning systems must operate under distributional drift in their observation streams, yet most policy architectures implicitly assume fully observed and noise-free states. We study robustness of Proximal Policy Optimization (PPO) under temporally persistent sensor failures that induce partial observability and representation shift. To respond to this drift, we augment PPO with temporal sequence models, including Transformers and State Space Models (SSMs), to enable policies to infer missing information from history and maintain performance. Under a stochastic sensor failure process, we prove a high-probability bound on infinite-horizon reward degradation that quantifies how robustness depends on policy smoothness and failure persistence. Empirically, on MuJoCo continuous-control benchmarks with severe sensor dropout, we show Transformer-based sequence policies substantially outperform MLP, RNN, and SSM baselines in robustness, maintaining high returns even when large fractions of sensors are unavailable. These results demonstrate that temporal sequence reasoning provides a principled and practical mechanism for reliable operation under observation drift caused by sensor unreliability.
\end{abstract}

\section{Introduction}
Real-world reinforcement learning (RL) systems, from robotic control to autonomous driving, depend on sensor feedback that is often unreliable. Failures, communication dropouts, or transient corruption lead to partial observability and degraded performance \citep{jaakkola_reinforcement_1994, kaelbling_planning_1998}. Standard RL agents, especially those based on multilayer perceptrons (MLPs), assume fully observed states and thus suffer sharp reward losses when inputs become unreliable \citep{morad2023popgymbenchmarkingpartiallyobservable, pleines2023memory}.

In practical systems, sensor outages exhibit both temporal persistence and correlations between related components \citep{vuran_spatio-temporal_2004, das2016}. To capture these effects during analysis and evaluation, we model sensor reliability using a standard stochastic failure process that accounts for individual- and group-level dependencies. This framework allows for a systematic study of robustness without introducing new assumptions about failure dynamics.

Building on this setting, our focus is on robust sequence-based PPO agents that can leverage temporal structure to infer missing information and maintain performance under partial observability. Our key contributions are as follows.
\begin{itemize}
    \item \textit{Sequence-Model PPO Architectures.} We integrate Transformer- and State Space–based encoders into PPO, enabling agents to exploit temporal dependencies to enhance decision-making.
    \item \textit{Theoretical Robustness Analysis.} We derive a high-probability bound on infinite-horizon reward degradation under stochastic observation failures, quantifying how robustness scales with policy smoothness and failure persistence.
    \item \textit{Empirical Evaluation.} We show that Transformer PPO agents achieve substantially higher reward retention under severe sensor dropout compared to MLP, RNN, and SSM baselines across MuJoCo tasks.
\end{itemize}

Together, these results establish sequence modeling as a key mechanism for robustness in online RL, demonstrating how temporal reasoning mitigates the brittleness of standard policy architectures in realistic, unreliable environments.

\section{Related Work}
\textbf{Partial Observability in RL} In the context of addressing partial observability in reinforcement learning, early efforts focused on adding memory through recurrence to value‐based agents. Deep Recurrent Q‐Networks (DRQN) \citep{hausknecht2015} have been proposed which replace DQN’s first fully-connected layer with an LSTM to integrate information over time from single-frame inputs. DRQN not only matches DQN’s performance on fully observable Atari games but also degrades more gracefully when evaluated under partially observed conditions. \cite{zambaldi2019} introduced a relational module within a model-free RL agent that encodes observations as sets of entity vectors and applies iterative message-passing (self-attention) to reason over object relations, yielding substantial gains in sample efficiency, generalization to novel scenarios, and interpretability on tasks like StarCraft II and Box-World. RLBenchNet \citep{smirnov2025rlbenchnet} provides an empirical comparison of sequence models under PPO on control and memory-based environments. However, this work is purely empirical and does not offer a theoretical characterization of performance degradation under partial observability. Moreover, the masking mechanisms used in their partially observable settings are simplistic (e.g., permanently removing velocity components or shrinking observation windows) and do not model realistic sensor failures with temporal persistence, correlation, or distributional drift. As a result, these benchmarks do not capture the structured, time-correlated observation failures encountered in real-world systems.

\textbf{Transformer Models} The Transformer architecture, introduced by \cite{Vaswani:2017}, revolutionized sequence modeling by relying entirely on self-attention mechanisms, dispensing with recurrence and convolution. Its ability to model long-range dependencies with parallelizable computation has made it the foundation of numerous advances in natural language processing and beyond \citep{devlin2018, brown2020}. Modified Transformers, such as UniTS \citep{gao2024units} and Transformer-XL \citep{dai2019}, have pushed Transformers even further by enabling universal time series representations and dependency learning beyond fixed-size context windows, respectively.

\textbf{State Space Models} Recent advances in sequence modeling have positioned structured state space models (SSMs) as an alternative to Transformer attention for capturing long-range dependencies with favorable scaling. S4 introduced the “structured state space sequence” formulation, showing that carefully parameterized continuous-time linear systems can be made computationally efficient while retaining strong, long-context performance \citep{Gu:2022}. Building toward simpler and more scalable SSM/RNN hybrids, Linear Recurrent Units (LRUs) “resurrect” efficient recurrent-style sequence modeling by using a structured (diagonal-style) recurrence that supports fast parallel training while maintaining strong long-range capability \citep{orvieto2023}. Mamba further advanced this line by introducing selective (input-dependent) state space updates, enabling token-wise, content-adaptive computation while preserving linear-time scaling in recurrent mode \citep{mamba2024}. Most recently, LinOSS \citep{linoss} proposed an oscillatory SSM derived from stable discretizations of forced second-order (harmonic oscillator) dynamics, yielding stable long-horizon behavior under a particularly simple condition—a nonnegative diagonal state matrix—and integrating efficiently via associative parallel scans.


\section{Sensor Failure Model}
We model correlated observation failures using a two-layer Markov process that captures both individual and group-level reliability dynamics. Each sensor follows a binary Markov chain for local hardware reliability, while sensor groups share a higher-level process representing subsystem dependencies (e.g., shared communication buses or power lines). This structure captures temporal persistence and spatial correlation while remaining analytically tractable with closed-form steady-state probabilities~\citep{huang2007, liu2006, mo2013}.

For sensor $i$, let $z_i(t) \in \{0,1\}$ denote its operational status, evolving as a two-state Markov chain with failure probability $p_{\mathrm{fail}}$ and recovery probability $p_{\mathrm{recover}}$. The steady-state probability of operation is $\pi_z = p_{\mathrm{recover}}/(p_{\mathrm{fail}} + p_{\mathrm{recover}})$.
Each group $j$ has a binary variable $y_j(t) \in \{0,1\}$ with analogous dynamics governed by $p^{\mathrm{group}}_{\mathrm{fail}}$ and $p^{\mathrm{group}}_{\mathrm{recover}}$, yielding steady-state $\pi_y = p^{\mathrm{group}}_{\mathrm{recover}}/(p^{\mathrm{group}}_{\mathrm{fail}} + p^{\mathrm{group}}_{\mathrm{recover}})$.
For sensor $i$ in group $j$, the effective operational status is $x_i(t) = z_i(t) \cdot y_j(t)$, requiring both individual and group processes to be up. Under independence, the effective steady-state probability is $\pi_x = \pi_z \cdot \pi_y$. The effective failure and recovery probabilities are:
\begin{align}
p_{\mathrm{fail}}^{\mathrm{eff}} &= 1 - (1 - p_{\mathrm{fail}})(1 - p^{\mathrm{group}}_{\mathrm{fail}}), \label{eq:peff}\\
p_{\mathrm{recover}}^{\mathrm{eff}} &\approx p_{\mathrm{recover}} \cdot p^{\mathrm{group}}_{\mathrm{recover}}. \label{eq:precover_eff}
\end{align}
A wide array of failure dynamics can be simulated with this model, including fast individual failures, fast group failures, mixed dynamics, and slow recovery with prolonged outages.

\section{Sequence-based PPO Agents}
\paragraph{Motivation.}
Standard PPO agents often use a feed-forward MLP policy $\pi_\theta(a_t \mid s_t)$ that maps the \emph{current} observation/state to an action.
In partially observable settings or under sensor dropouts, conditioning on only $s_t$ can induce brittle behavior because the policy ignores temporal context.
We therefore couple PPO with sequence encoders that summarize recent history (Transformers) or maintain a recurrent hidden state (RNNs/SSMs).

\subsection{Transformer-based PPO agent}
\label{sec:transformer_ppo}

\paragraph{History buffer.}
For each parallel environment, we maintain a circular buffer of the most recent $L$ observations 
$\mathcal{B}_t = (o_{t-L+1}, \dots, o_t)$, where $L$ is the configured sequence length.
At time $t$, we form a length-$L$ sequence $X_t \in \mathbb{R}^{L \times d_{\text{in}}}$ by rolling the buffer so that the oldest valid element appears first, and we construct a padding mask $M_t \in \{0,1\}^{L}$ indicating which positions are invalid (e.g., before the buffer is filled or after an environment resets).

\paragraph{Encoder.}
We first project each observation to the model dimension $d$ and add \emph{sinusoidal} positional encodings:
\begin{equation}
\tilde{X}_t = \text{PE}\!\left(X_t W_{\text{in}} + b_{\text{in}}\right), \qquad
\tilde{X}_t \in \mathbb{R}^{L \times d}.
\end{equation}
We then apply a Transformer encoder with temporal self-attention using the key-padding mask $M_t$:
\begin{equation}
H_t = \text{TransformerEnc}(\tilde{X}_t; M_t), \qquad
H_t = (h_{t,1}, \dots, h_{t,L}),\; h_{t,i}\in\mathbb{R}^{d}.
\end{equation}

\paragraph{Attention pooling.}
To obtain a fixed-size feature vector that can be fed into separate actor and critic heads, we apply learned attention pooling over time:
\begin{equation}
e_{t,i} = w^\top h_{t,i} + b, \qquad
\alpha_{t,i} = \frac{\exp(e_{t,i})}{\sum_{j:\,M_{t,j}=0}\exp(e_{t,j})}, \qquad
z^{\text{attn}}_t = \sum_{i:\,M_{t,i}=0} \alpha_{t,i} h_{t,i}.
\end{equation}


\subsection{RNN/SSM-based PPO Agent}
\label{sec:rnn_ssm_ppo}

\paragraph{Overview.}
We unify recurrent neural networks (RNNs; e.g., GRU/LSTM) and recurrent state-space models (SSMs; e.g., LRU) 
under a common \emph{recurrent latent-state encoder} that maintains memory across time.
At each step, the policy conditions on a contextual feature $z_t$ computed from the current input and a carried hidden state $h_{t-1}$.
This allows the PPO policy to use temporal context under partial observability while keeping the PPO objective unchanged.

\paragraph{Generic recurrent encoder.}
Let $o_t$ denote the observation and $x_t \in \mathbb{R}^{d}$ represent an input embedding:
\begin{equation}
x_t = g_{\text{enc}}(o_t).
\end{equation}
We model both RNNs and SSMs as a stateful mapping
\begin{equation}
(h_t,\, z_t) = \mathcal{E}_\psi(h_{t-1}, x_t; d_t),
\label{eq:generic_encoder}
\end{equation}
where $h_t$ is the recurrent memory state, $z_t$ is the emitted feature, and $d_t\in\{0,1\}$ is the episodic done flag. Similarly to the Transformer-based agent, the feature vector $z_t$ is then passed to separate actor and critic heads.

\section{Theory}
We prove a high-probability bound on the infinite-horizon reward degradation for the RL agent under the stochastic sensor failure model.

The notation is as follows. \(\mathcal{M}=(\mathcal S,\mathcal A,P,r,\gamma)\) is a discounted Markov decision process (MDP) with discount \(0<\gamma<1\). The random one-step reward at $(s,a)$ is $r(s,a)$ and its expectation is
$R(s,a):=\mathbb E[r(s,a)]$. A stochastic policy $\pi$ maps observations to action distributions.
Given an observation map $h:\mathcal S\to\mathbb R^d$, actions are drawn as
$a_t\sim \pi(\cdot \mid h(S_t))$.
The \emph{action–value function} (or $Q$-function) of $\pi$ is
\begin{equation}
\label{eq:def-Qpi}
Q^\pi(s,a)
:= \mathbb{E}_\pi\!\left[
\sum_{t=0}^{\infty} \gamma^{t}\, r(S_t,a_t)
\;\middle|\;
S_0=s,\ a_0=a,\ S_{t+1}\sim P(\cdot\mid S_t,a_t),\ a_{t\ge 1}\sim \pi(\cdot\mid h(S_t))
\right].
\end{equation}
Equivalently, $Q^\pi$ satisfies the Bellman expectation equation
\begin{equation}
\label{eq:bellman-Q}
Q^\pi(s,a)
= R(s,a) + \gamma\,\mathbb E_{S'\sim P(\cdot\mid s,a)}
\Big[\, \mathbb E_{a'\sim \pi(\cdot\mid h(S'))}\big[\,Q^\pi(S',a')\,\big] \Big].
\end{equation}
The \emph{state value function} is
\begin{equation}
\label{eq:def-Vpi}
V^\pi(s)
:= \mathbb{E}_\pi\!\left[
\sum_{t=0}^{\infty} \gamma^{t}\, r(S_t,a_t)
\;\middle|\; S_0=s
\right]
= \mathbb E_{a\sim \pi(\cdot\mid h(s))}\!\left[\,Q^\pi(s,a)\,\right],
\end{equation}
and the \emph{advantage} is $A^\pi(s,a):=Q^\pi(s,a)-V^\pi(s)$.

A mask \(m_t\in\{0,1\}^d\) zeroes failed sensors: \(h_{m_t}(s_t)=m_t\odot h(s_t)\). The mask process \(\{M_t\equiv m_t\}_{t\ge0}\) is generated by the two-layer Markov model (per-sensor chains and group chains). The stationary marginal probability that sensor \(i\) is ``up'' is denoted \(\pi_{x,i}\) (for identical sensors we write \(\pi_x\)). For the two-layer model this marginal is the product formula: \(\pi_x=\dfrac{p_{\mathrm{recover}}}{p_{\mathrm{fail}}+p_{\mathrm{recover}}}\cdot \dfrac{p_{\mathrm{grouprecover}}}{p_{\mathrm{groupfail}}+p_{\mathrm{grouprecover}}}.\)

We make the following assumptions.
\begin{assumption}[Bounded sensor outputs]\label{assump:1:bounded}
For all $s\in\mathcal S$ and $i\in\{1,\dots,d\}$, $|h_i(s)|\le B_i<\infty$.
\end{assumption}

\begin{assumption}[Policy smoothness — Wasserstein Lipschitzness]\label{assump:2:wasserstein}
Let $(\mathcal A,\|\cdot\|)$ be the action space with the metric inducing the $1$-Wasserstein distance $W_1$ on $\mathcal P(\mathcal A)$. There exists $L_\pi\ge 0$ such that for all observations $o,o'\in\mathbb R^d$,
\[
W_1\!\big(\pi(\cdot\mid o),\,\pi(\cdot\mid o')\big)\ \le\ L_\pi\,\|o-o'\|_1.
\]
\end{assumption}

\begin{assumption}[$Q^\pi$ is Lipschitz in action]\label{assump:3:Qlip}
There exists $L_Q\ge 0$ such that for all $s\in\mathcal S$ and $a,a'\in\mathcal A$,
\[
\big|Q^\pi(s,a)-Q^\pi(s,a')\big|\ \le\ L_Q\,\|a-a'\|.
\]
\end{assumption}

\begin{assumption}[Augmented-chain geometric ergodicity]\label{assump:4:augmix}
Fix the policy $\pi$. The augmented process $\Xi_t=(S_t,M_t)$ is an irreducible, aperiodic Markov chain on $\mathcal S\times\{0,1\}^d$ with stationary law $\pi_\Xi$, and is geometrically ergodic. Write its total-variation mixing time at tolerance $1/8$ as
\[
\tau\ :=\ t_{\mathrm{mix}}(1/8)\ :=\ \inf\Big\{t\ge 0:\ \sup_{\xi_0}\big\|\mathcal L(\Xi_t\!\mid\Xi_0=\xi_0)-\pi_\Xi\big\|_{\mathrm{TV}}\le\tfrac{1}{8}\Big\}.
\]
\end{assumption}

\begin{assumption}[Mask exogeneity / independence]\label{assump:5:indep}
Under the stationary law of $\Xi_t$, $M_t$ is independent of $S_t$ and has stationary distribution $\pi_M$ with marginals $\mathbb P(M_{t,i}=1)=\pi_{x,i}$. Consequently, for all $i$,
\[
\mathbb E\big[(1-M_{t,i})\,|h_i(S_t)|\big]\ =\ (1-\pi_{x,i})\,\mathbb E_{S\sim d_\pi}\big[|h_i(S)|\big]\ :=\ (1-\pi_{x,i})\,h_i,
\]
where $d_\pi$ is the stationary state distribution under $\pi$.
\end{assumption}

\paragraph{Loss variables.}
Define the \emph{one-step counterfactual gap in future return} at time $t$ as
\[
\Delta_t\ :=\ \mathbb E_{a\sim\pi(\cdot\mid h(S_t))}\!\big[Q^\pi(S_t,a)\big]
\;-\;
\mathbb E_{a\sim\pi(\cdot\mid h_{M_t}(S_t))}\!\big[Q^\pi(S_t,a)\big],
\]
its nonnegative version $X_t:=|\Delta_t|$, and the discounted cumulative loss
\[
S\ :=\ \sum_{t=0}^\infty \gamma^t X_t,\qquad \mu_S:=\mathbb E[S].
\]
Let $C_{\max}\ :=\ L_Q\,L_\pi\sum_{i=1}^d B_i$. $\Delta_t$ measures, in units of discounted \emph{future} return, how much is lost at time $t$ by drawing the action from the masked-action distribution $\pi(\cdot\mid h_{M_t}(S_t))$ instead of the full-observation distribution $\pi(\cdot\mid h(S_t))$, while evaluating consequences with the same $Q^\pi$.

\begin{theorem}[High-probability reward-degradation bound]\label{thm:main}
Assume \ref{assump:1:bounded}–\ref{assump:5:indep}. Fix $\delta\in(0,1)$. Then, with probability at least $1-\delta$,
\[
S\ \le\ \mu_S\ +\ C_{\max} \min\left\{ \sqrt{\frac{2\tau}{1-\gamma^2}\,\ln\frac{2}{\delta}}
\ +\ \frac{4}{3}\,\tau\,\ln\frac{2}{\delta}, \frac{1}{1-\gamma} \right\}.
\]
Moreover, the mean satisfies
\[
\mu_S\ \le\ \frac{L_Q L_\pi}{1-\gamma}\sum_{i=1}^d (1-\pi_{x,i})\,h_i,
\qquad h_i:=\mathbb E_{S\sim d_\pi}\big[|h_i(S)|\big].
\]
\end{theorem}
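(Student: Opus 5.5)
The plan is to first control each loss variable pointwise, then bound its mean using stationarity, and finally get concentration from a Bernstein-type inequality for geometrically ergodic chains.

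\textbf{Step 1: pointwise bound.} For fixed $s$, the map $a\mapsto Q^\pi(s,a)$ is $L_Q$-Lipschitz by Assumption~\ref{assump:3:Qlip}. Kantorovich--Rubinstein duality then gives
\[
|\Delta_t|\le L_Q\,W_1\big(\pi(\cdot\mid h(S_t)),\pi(\cdot\mid h_{M_t}(S_t))\big).
\]
By Assumption~\ref{assump:2:wasserstein} this is at most $L_QL_\pi\|h(S_t)-M_t\odot h(S_t)\|_1$, which equals $L_QL_\pi\sum_i(1-M_{t,i})|h_i(S_t)|$. Assumption~\ref{assump:1:bounded} then yields $0\le X_t\le C_{\max}$ almost surely.

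\textbf{Step 2: the mean.} Take expectations of the Step~1 bound. Assumption~\ref{assump:5:indep} gives $\mathbb E[X_t]\le L_QL_\pi\sum_i(1-\pi_{x,i})h_i$, provided the chain is started from (or treated under) its stationary law. Monotone convergence then lets us sum over $t$, giving
\[
\mu_S\le \frac{L_QL_\pi}{1-\gamma}\sum_i(1-\pi_{x,i})h_i.
\]
If the chain is not started stationary, I would state the bound under $\Xi_0\sim\pi_\Xi$, which is the reading Assumption~\ref{assump:5:indep} implicitly adopts.

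\textbf{Step 3: concentration.} The trivial branch is immediate: $S\le C_{\max}/(1-\gamma)$ always, so the bound with the $\frac{1}{1-\gamma}$ term holds with probability one. Since $\mu_S\ge0$, the minimum is justified once the Bernstein branch is also shown.

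For the Bernstein branch, note that $X_t=f(\Xi_t)$ is a bounded function of the augmented chain. Set $Y_t=\gamma^tX_t/C_{\max}\in[0,\gamma^t]$, and let $\sigma^2=\sum_t\gamma^{2t}=1/(1-\gamma^2)$ be the sum of squared ranges. I would apply Paulin's Bernstein/McDiarmid-type inequality for Markov chains with mixing time $\tau$ to the weighted sum $\sum_t c_tf(\Xi_t)$ with $c_t=\gamma^t$. The route is a Marton coupling built from blocks of length of order $\tau$, which inflates the effective variance proxy by a factor of order $\tau$ and the range term by order $\tau$. This gives a tail of the form
\[
P\big(S-\mu_S\ge \epsilon\big)\le 2\exp\!\left(-\frac{\epsilon^2}{2C_{\max}^2\tau\sigma^2+\frac{4}{3}\cdot C_{\max}\tau\,\epsilon}\cdot c\right),
\]
where $c$ is an absolute constant to be tracked. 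Inverting in the standard way, using $\sqrt{a+b}\le\sqrt a+\sqrt b$, gives
\[
\epsilon=C_{\max}\left(\sqrt{2\tau\sigma^2\ln(2/\delta)}+\tfrac{4}{3}\tau\ln(2/\delta)\right).
\]
Infinite sums are handled by truncating at horizon $T$ and letting $T\to\infty$; the truncation error is deterministic and at most $C_{\max}\gamma^T/(1-\gamma)$, so it vanishes in the limit.

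\textbf{Main obstacle.} The hardest part is Step~3: obtaining a concentration inequality for \emph{weighted} sums of functions of a geometrically ergodic chain with exactly the constants $2\tau$ and $\frac{4}{3}\tau$. Paulin's results are stated for general bounded-difference functions via Marton couplings, and matching the constants requires care with the mixing-time definition at tolerance $1/8$. My fallback is a Freedman/Bernstein martingale argument. I would write $S-\mu_S$ as a sum of Doob martingale differences $D_t=\mathbb E[S\mid\mathcal F_t]-\mathbb E[S\mid\mathcal F_{t-1}]$. These satisfy $|D_t|\lesssim C_{\max}\gamma^t\tau$, using the fact that the influence of $\Xi_t$ on future conditional expectations decays geometrically with contraction rate $1/4$ per $\tau$ steps. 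Their conditional variances sum to order $C_{\max}^2\tau/(1-\gamma^2)$. Freedman's inequality then yields the stated form, absorbing absolute constants into $\tau$ as needed.
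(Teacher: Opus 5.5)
Your proposal is correct and follows the paper's proof essentially step for step. The shared steps are: the Kantorovich--Rubinstein plus Wasserstein-Lipschitz pointwise bound giving $0\le X_t\le C_{\max}$ and the trivial cap $S\le C_{\max}/(1-\gamma)$; a Paulin-type Bernstein inequality for the augmented chain applied to $f_t=\gamma^tX_t$ with $b_t=\gamma^tC_{\max}$ and $\sum_t b_t^2=C_{\max}^2/(1-\gamma^2)$; passage to the infinite horizon; the standard quadratic inversion; and the stationarity-plus-independence bound on $\mu_S$. The one difference is how your ``main obstacle'' is handled: the paper invokes Lemma~\ref{lem:bernstein} verbatim, with the tail $\exp\bigl(-u^2/(2\tau\sum_t b_t^2+\tfrac{4}{3}b_{\max}\tau u)\bigr)$ and $\tau=t_{\mathrm{mix}}(1/8)$ built in, so your constant $c$ equals $1$ by citation rather than by derivation. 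To match the statement you should commit to that cited form, since your Freedman fallback, which absorbs constants into $\tau$, would not reproduce the bound as written.
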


\paragraph{Interpretation of the bound.} 
The mean term $\mu_S$ captures the average loss and scales linearly with each sensor's down-time $(1-\pi_{x,i})$ and with the policy and critic sensitivities $L_\pi$ and $L_Q$. Notably, only the marginal up-rates $\{\pi_{x,i}\}$ enter this term, so correlations between sensors do not directly affect the expected degradation. The stochastic deviation around this mean has two components: a square-root term $C_{\max}\sqrt{\tfrac{2\tau}{1-\gamma^2}\ln\frac{2}{\delta}}$ that grows with slower mixing (larger $\tau$), longer effective horizon (larger $\gamma$), and desired confidence (smaller $\delta$), and a linear correction $\frac{4}{3}C_{\max}\tau\ln\frac{2}{\delta}$ from the Bernstein inequality that is typically dominated by the square-root term unless $\delta$ is extremely small or $\tau$ is large. The leverage constant $C_{\max}=L_Q L_\pi \sum_i B_i$ represents the worst-case per-step impact, factoring out policy smoothness, critic smoothness, and observation scale, and thus globally scales both deviation terms.

\paragraph{Dependence on the mask process.} 
In the two-layer mask model, the stationary per-sensor up-rate factorizes as $\pi_x = \pi_z\,\pi_y$ where $\pi_z = \frac{p_{\mathrm{recover}}}{p_{\mathrm{fail}}+p_{\mathrm{recover}}}$ and $\pi_y = \frac{p_{\mathrm{group\,recover}}}{p_{\mathrm{group\,fail}}+p_{\mathrm{group\,recover}}}$; higher recovery rates or lower failure rates increase $\pi_x$ and reduce the mean term $\mu_S$. The burstiness of outages, quantified by the mixing time $\tau$, can be bounded conservatively using the spectral gaps of the sensor and group layers: $\tau \lesssim \frac{\ln 4}{\min\{g_{\mathrm{sensor}},\,g_{\mathrm{group}}\}}$ where $g_{\mathrm{sensor}} := p_{\mathrm{fail}}+p_{\mathrm{recover}}$ and $g_{\mathrm{group}} := p_{\mathrm{group\,fail}}+p_{\mathrm{group\,recover}}$. Slower group or sensor dynamics (smaller spectral gap) increase $\tau$, which widens both the square-root and linear deviation terms in the high-probability bound.

\section{Empirical Results}
Our experimental RL environments are based on MuJoCo \citep{todorov2012mujoco}, an RL  physics engine for simulating continuous control tasks involving complex, articulated robots. These environments focus on locomotion, balance, and agility, providing rich benchmarks for developing and testing RL algorithms in high-dimensional, continuous control tasks. 

We experiment on four standard MuJoCo continuous-control benchmarks: HalfCheetah-v4, Hopper-v4, Walker2d-v4, and Ant-v4. We train eight PPO-based agents under full observability and partial observability induced by our sensor failure model. The suite of models tested comprised an MLP baseline and the following sequence-based models:
\begin{itemize}
    \item \textbf{RNNs/SSMs}: Linear Recurrent Unit (LRU) \citep{pmlr-v202-orvieto23a} and Gated Recurrent Unit (GRU) \citep{cho2014} 
    \item \textbf{Transformers}: Transformer \citep{Vaswani:2017}, UniTS \citep{gao2024units}, and Gated Transformer-XL (GTrXL) \citep{dai2019}
\end{itemize}

For our sensor model, we partition observations into three sensor groups with failure and recover probabilities $p_{\mathrm{fail}}$ = 1\%, $p_{\mathrm{recover}}$ = 90\%, $p_{\mathrm{fail}}^{\mathrm{group}}$ = 55\%, and $p_{\mathrm{recover}}^{\mathrm{group}}$ = 90\%, yielding an effective recovery rate $p_{\mathrm{recover}}^{\mathrm{eff}}$ of 60\%. Failed sensors are implemented by masking normalized features to zero and appending a binary mask to differentiate dropped inputs from valid zero-mean inputs.

All models share a fixed PPO configuration using CleanRL defaults for continuous-control PPO \citep{huang_cleanrl_2022}, as well as matched architectural capacities (where applicable) and randomly initialized priors. Detailed hyperparameter choices can be found in \ref{sec:hyperparameters}. Agents are trained for one million steps within a single environment instance across 8 random seeds, and minibatches for sequence-based agents comprise trajectory segments of length 16 for truncated backpropagation through time. For models dependent on hidden states, an initial burn-in of 8 timesteps was used to initialize hidden states prior to each gradient computation. For the Transformer- and UniTS-based agents, encoder dropout is enabled during training for stability. Performance during training was measured as the average episodic return across the batch collected per epoch, and the final training curves shown in Figures \ref{fig:training_curves_halfcheetah} and \ref{fig:training_curves_appendix} were generated by plotting the smoothed medians and inter-quartile ranges of the aggregated results across all seeds.

\begin{figure}[tbp!]
    \centering
    \includegraphics[width=\linewidth]{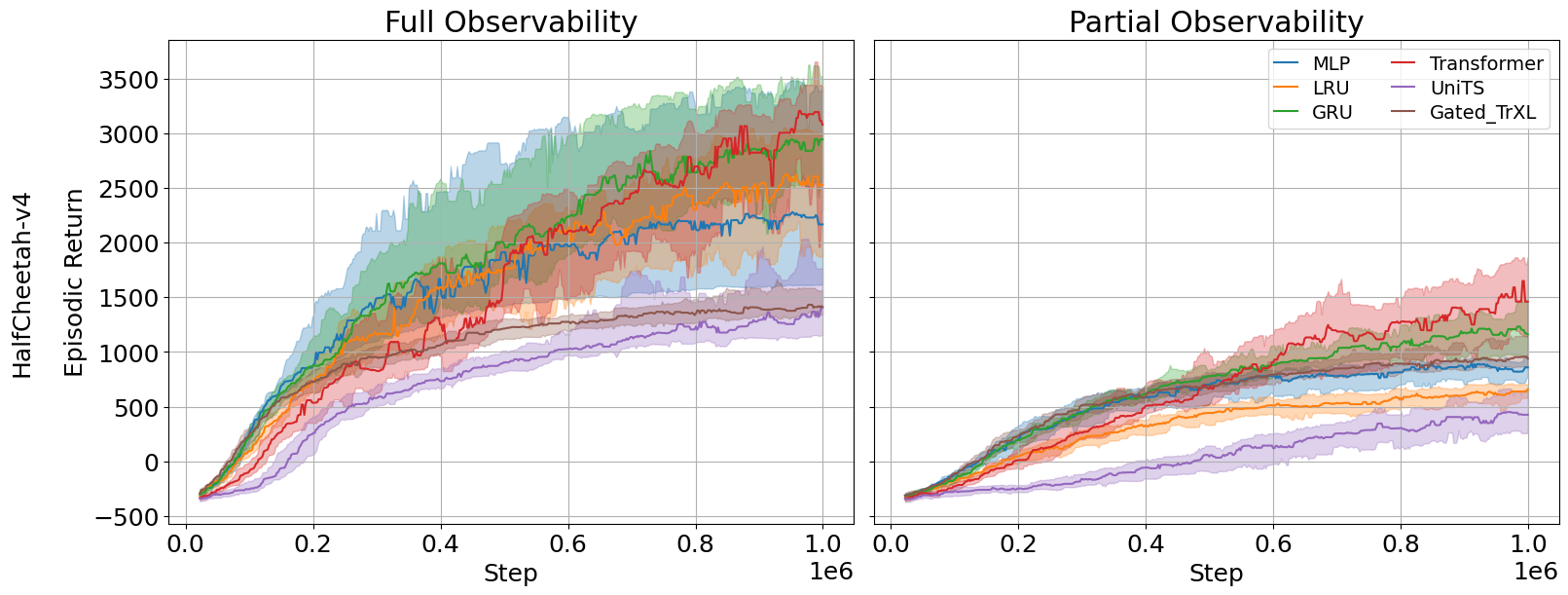}
    \caption{Sample PPO training curves on HalfCheetah-v4 under full (left) and ~60\% partial (right) observability. Lines represent median episodic return and shaded regions denote inter-quartile ranges across 8 random seeds. Training curves generated under partial observability rise more slowly and plateau at lower returns than those produced using fully observed states.}
    \label{fig:training_curves_halfcheetah}
\end{figure}

For evaluation, episodic returns were computed over 100 episodes across all eight seeds using a deterministic version of the learned policy, where actions were taken as the policy mean rather than sampled. The evaluation results per model were then aggregated, and the median performance with 95\% confidence intervals was estimated via bootstrap resampling. Evaluation results are presented in Figure \ref{fig:eval_violins}.

Because MuJoCo environments are mostly Markovian, agents do not require much memory to achieve high rewards under full observability, as each observation contains sufficient information for optimal control. The MLP frequently achieves the highest returns under full observability, benefiting from its architectural simplicity and straightforward feature transformations. Performance among sequence-based policies in the fully observed regime proves more task dependent. On HalfCheetah, multiple sequence models are competitive: the Transformer achieves the strongest performance, and simpler recurrent models like the GRU and LRU also perform well, suggesting that both explicit and latent memory can be beneficial in certain cases even when the state is fully observed. However, on more challenging environments, sequence models underperform to varying degrees, indicating that additional temporal structure and architectural complexity can also be a liability when the current observation is already sufficient. Overall, Figure \ref{fig:eval_violins} indicates that providing temporal context under full observability often does not reliably improve returns and fails to outperform the MLP baseline.

\begin{figure}[htbp!]
    \centering
    \includegraphics[width=\linewidth]{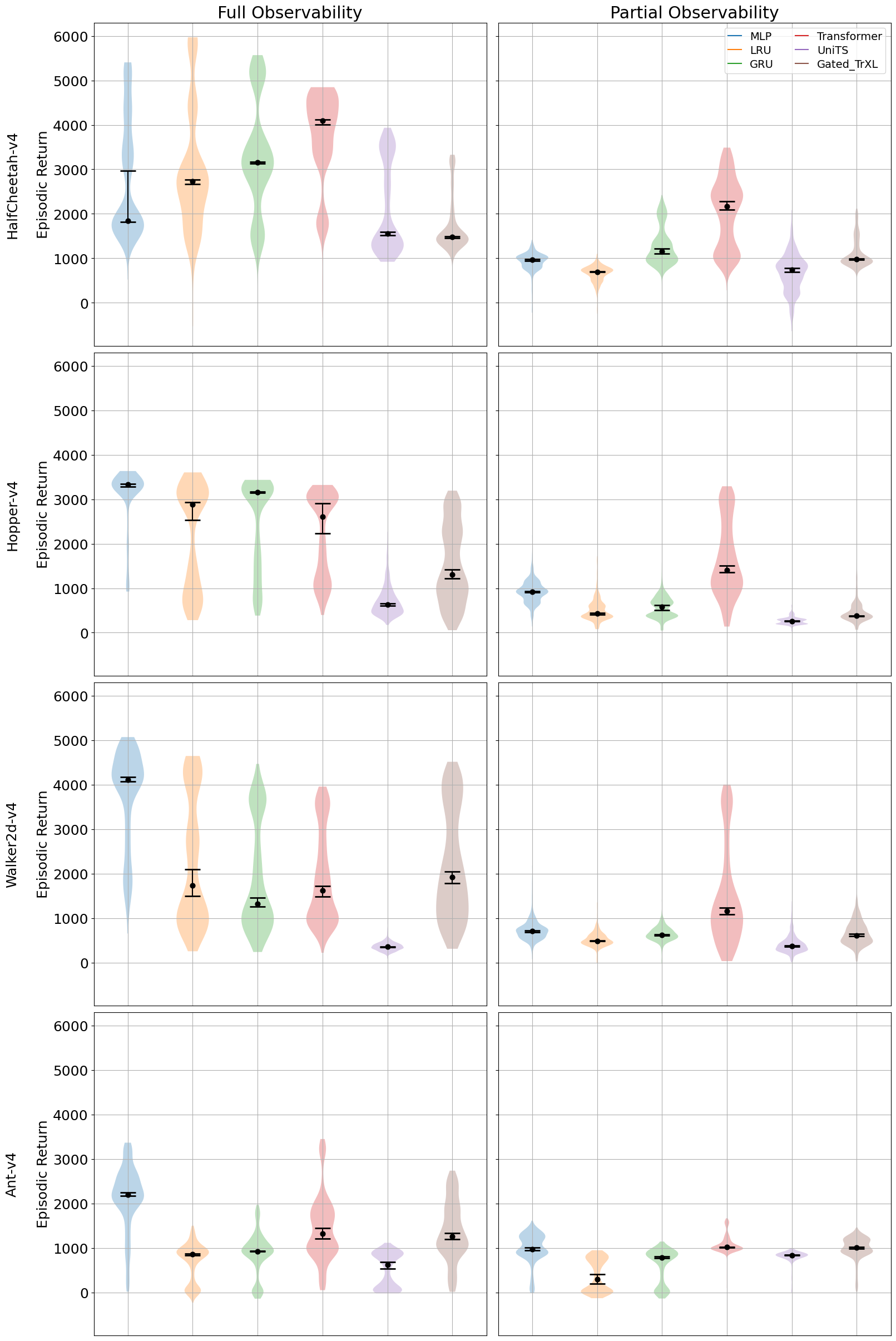}
    \caption{Evaluation episodic returns for PPO agents on MuJoCo environments under full (left) and ~60\% partial (right) observability, with task complexity roughly increasing from top to bottom. Each violin shows the distribution of pooled episodic returns from 100 episodes across 8 random seeds. Black markers denote the median with 95\% bootstrapped CI. While all models suffer performance degradation under partial observability, the Transformer agent demonstrates greater robustness.}
    \label{fig:eval_violins}
\end{figure}

Under the partial observability induced by the sensor failure model, all agents exhibit reduced performance relative to the fully observed setting, confirming that intermittent feature dropout substantially increases task difficulty. Yet, the degree of degradation strongly depends on the architecture and task. Without temporal context, the MLP experiences the most significant drops in performance on average, particularly in Hopper and Walker2d (Figure \ref{fig:eval_violins}). Agents that rely on memory primarily through latent recurrence -- whether implemented via RNNs, state space dynamics, or the gated recurrent attention mechanisms seen in GTrXL -- also demonstrate a substantial lack of robustness. In very few settings, these latent-memory models do outperform the MLP by a small margin, indicating that recurrence can occasionally compensate for missing features, but more often they exhibit noticeably heavier low-return tails. The Transformer policy, on the other hand, proves to be the most robust of all models under partial observability, scoring the highest evaluation medians across all environments and maintaining relatively stable performance. Of the tasks evaluated, Ant proves comparatively challenging under partial observability, and most sequence-based agents do not consistently translate memory over the high-dimensional state into improved returns.

\section{Discussion}
In reinforcement learning settings with missing state information and online PPO, the combination of non-stationarity, partial observability, and the need for flexible memory across variable timescales creates significant architectural challenges. RNNs, SSMs, and even Gated Transformer-XL struggle in this regime because their recurrent dynamics impose strict constraints on memory evolution, treating inputs uniformly and assuming regular input streams. When state information is missing, these assumptions about smoothness and regularity can be violated, causing the recurrent dynamics to diverge or lose critical information, with limited ability to selectively retrieve specific past observations. In contrast, stateless Transformers process all variables jointly within a single sequence and leverage self-attention mechanisms that allow each output to directly attend to all available past tokens, effectively learning temporal correlations by having state dimensions participate in the attention computation. This architecture provides inherent robustness to missing data: the self-attention mechanism can dynamically identify and utilize whichever past observations are present while naturally skipping over gaps, enabling Transformers to maintain performance even under irregular observability by flexibly referencing earlier relevant inputs without being constrained by assumptions of temporal regularity. Our experiments support these points.

A notable exception among the sequence models is UniTS, which underperforms across all settings. We hypothesize that this result is due largely to an inductive bias mismatch. While the Transformer processes all variables jointly, UniTS processes each variable independently during sequence attention, and cross-variable attention happens separately. Though this factorization is beneficial for creating unified representations of diverse time series, it may hinder learning of joint temporal patterns in continuous control RL, since cross-variable interactions are largely deferred to variable-level attention.

\section{Conclusion}
In this work, we studied the robustness of Proximal Policy Optimization under temporally persistent sensor failures, framing observation drift as partial observability with structured temporal correlations. By augmenting PPO with sequence-based policy architectures, we showed that agents can leverage history to infer missing information and maintain performance when observations are unreliable.

We derived high-probability bounds on infinite-horizon reward degradation under a stochastic sensor failure model, clarifying how robustness depends on policy smoothness, critic sensitivity, sensor availability, and failure persistence. Empirically, experiments on MuJoCo benchmarks confirm these insights: while sequence models offer limited benefit under full observability, they substantially improve robustness under sensor dropout. Stateless Transformer-based PPO agents in particular consistently outperform MLP, RNN, and state-space baselines, achieving higher returns and more stable behavior when large fractions of sensors are unavailable.

Overall, our results demonstrate that temporal sequence modeling provides a principled and effective mechanism for robust reinforcement learning in unreliable environments, highlighting attention-based architectures as a promising direction for real-world deployment under observation drift.

\section*{Acknowledgments}
The authors acknowledge the MIT Lincoln Laboratory Supercomputing Center for providing the high-performance computing resources that have contributed to the research results reported in this paper.

DISTRIBUTION STATEMENT A. Approved for public release. Distribution is unlimited.
This material is based upon work supported by the Under Secretary of War for Research and Engineering under Air Force Contract No. FA8702-15-D-0001 or FA8702-25-D-B002. Any opinions, findings, conclusions or recommendations expressed in this material are those of the author(s) and do not necessarily reflect the views of the Under Secretary of War for Research and Engineering.
© 2026 Massachusetts Institute of Technology.

Delivered to the U.S. Government with Unlimited Rights, as defined in DFARS Part 252.227-7013 or 7014 (Feb 2014). Notwithstanding any copyright notice, U.S. Government rights in this work are defined by DFARS 252.227-7013 or DFARS 252.227-7014 as detailed above. Use of this work other than as specifically authorized by the U.S. Government may violate any copyrights that exist in this work.

\bibliography{iclr2026_conference}
\bibliographystyle{iclr2026_conference}

\appendix
\section{Appendix}

\subsection{Proof of Theorem \ref{thm:main}}
The proof of Theorem \ref{thm:main} is based on two lemmas.
\begin{lemma}[Pointwise Wasserstein bound on the per-step loss]\label{lem:KR}
Under Assumptions~\ref{assump:2:wasserstein}–\ref{assump:3:Qlip},
\[
|\Delta_t|
\;=\;
\big|\mathbb E_{a\sim\pi(\cdot\mid h(S_t))}Q^\pi(S_t,a)
-\mathbb E_{a\sim\pi(\cdot\mid h_{M_t}(S_t))}Q^\pi(S_t,a)\big|
\;\le\;
L_Q\,W_1\!\big(\pi(\cdot\mid h(S_t)),\pi(\cdot\mid h_{M_t}(S_t))\big)
\]
\[
\le\ L_Q L_\pi \,\|h(S_t)-h_{M_t}(S_t)\|_1
\;=\;
L_Q L_\pi \sum_{i=1}^d (1-M_{t,i})\,|h_i(S_t)|
\;\le\; C_{\max}.
\]
\emph{Proof.} Kantorovich–Rubinstein duality gives
$|\mathbb E_{\nu}f-\mathbb E_{\nu'}f|\le \mathrm{Lip}(f)\,W_1(\nu,\nu')$ for 1-Lipschitz $f$;
here $f(a)=Q^\pi(S_t,a)$ has $\mathrm{Lip}(f)\le L_Q$ by Assumption~\ref{assump:3:Qlip}. Apply Assumption~\ref{assump:2:wasserstein} and the mask identity $\|h-h_M\|_1=\sum_i(1-M_i)|h_i|$, then Assumption~\ref{assump:1:bounded}. \qed
\end{lemma}
The next lemma states a Markov-chain Bernstein-type concentration inequality.
\begin{lemma}\cite{Paulin:2015}[Bernstein for geometrically ergodic chains]\label{lem:bernstein}
Let $\Xi_t$ satisfy Assumption~\ref{assump:4:augmix}. Let $\{f_t\}_{t\ge 0}$ be bounded functions with $|f_t(\xi)|\le b_t$ and define $S_n=\sum_{t=0}^{n-1}f_t(\Xi_t)$ and $Z_n=S_n-\mathbb E S_n$.
Then, for any $u>0$,
\[
\mathbb P\!\left(Z_n\ge u\right)\ \le\ \exp\!\left(
-\frac{u^2}{\,2\tau\sum_{t=0}^{n-1}b_t^2+\frac{4}{3}b_{\max}\tau\,u}\right),
\quad b_{\max}:=\max_{0\le t\le n-1}b_t.
\]
\end{lemma}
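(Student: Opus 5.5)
The plan is to prove the inequality by the martingale route used by Paulin: decompose $Z_n$ into a Doob martingale, control each increment with a coupling built from the mixing time $\tau$ of Assumption~\ref{assump:4:augmix}, and then apply the classical Bernstein--Freedman inequality for martingales with bounded increments. Let $\mathcal F_k=\sigma(\Xi_0,\dots,\Xi_k)$ and $D_k=\mathbb E[S_n\mid\mathcal F_k]-\mathbb E[S_n\mid\mathcal F_{k-1}]$, with $D_0=\mathbb E[S_n\mid\mathcal F_0]-\mathbb E S_n$. Then $Z_n=\sum_{k=0}^{n-1}D_k$. It then suffices to bound two things: the increments, $|D_k|\le c$, and the predictable quadratic variation, $\sum_k\mathbb E[D_k^2\mid\mathcal F_{k-1}]\le V$ almost surely. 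Freedman's inequality then gives $\mathbb P(Z_n\ge u)\le\exp\!\big(-u^2/(2V+\tfrac23 cu)\big)$. We will therefore aim for $V=\tau\sum_t b_t^2$ and $c=2\tau b_{\max}$.

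\textbf{Step 1 (coupling estimate).} By the Markov property, $D_k$ is a difference of conditional expectations of $\sum_{t\ge k}f_t(\Xi_t)$ started from two different values of $\Xi_k$, averaged against the law of $\Xi_k$ given $\Xi_{k-1}$. For two copies of the chain started at $\xi,\xi'$, the definition of $\tau=t_{\mathrm{mix}}(1/8)$ and the triangle inequality give $\sup_{\xi,\xi'}\|\mathcal L(\Xi_\tau\mid\xi)-\mathcal L(\Xi_\tau\mid\xi')\|_{\mathrm{TV}}\le 1/4$. Submultiplicativity of this quantity then yields a maximal coupling whose non-meeting probability at lag $j$ is at most $a_j:=(1/4)^{\lfloor j/\tau\rfloor}$. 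Consequently
\[
|D_k|\ \le\ 2\sum_{j\ge 0}a_j\,b_{k+j},
\]
where the factor $2$ comes from $|f_t(\xi)-f_t(\xi')|\le 2b_t$. Using $\sum_j a_j\le \tfrac43\tau$, this gives $|D_k|\le \tfrac83\tau b_{\max}$. That is already of the right order, and we would tighten it to $c=2\tau b_{\max}$ by treating the lag-$0$ term separately.

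\textbf{Step 2 (variance).} Bound $\mathbb E[D_k^2\mid\mathcal F_{k-1}]\le\big(\sum_j a_j b_{k+j}\big)^2$ (up to the constant from Step 1). By Cauchy--Schwarz this is at most $\big(\sum_j a_j\big)\sum_j a_j b_{k+j}^2$. Summing over $k$ and exchanging the order of summation gives $V\lesssim\big(\sum_j a_j\big)^2\sum_t b_t^2\asymp\tau^2\sum_t b_t^2$.

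\textbf{Main obstacle.} The crude estimate in Step 2 has order $\tau^2$, whereas the stated lemma requires order $\tau$. Obtaining the linear dependence on $\tau$ is the hard step. My first attempt would exploit the cancellation in the conditional variance. Conditionally on $\mathcal F_{k-1}$, the lag-$j$ contribution to $D_k$ is centred, and it is nonzero only when the coupled copies started from $\Xi_k$ and from an independent draw from $P(\Xi_{k-1},\cdot)$ fail to meet. This should bound $\mathbb E[D_k^2\mid\mathcal F_{k-1}]$ by $\sum_j a_j b_{k+j}^2$ times a constant, rather than by the square of a sum, which would give $V\lesssim\tau\sum_t b_t^2$.

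If that refinement does not close, the fallback is a blocking argument. Split $\{0,\dots,n-1\}$ into blocks of length $\tau$ and replace odd and even blocks by independent copies, paying a TV error of $1/4$ per separation. Each block is then an independent summand bounded by $\tau b_{\max}$ with variance at most $\tau\sum_{t\in\text{block}}b_t^2$, and the independent Bernstein inequality applies. Either route matches the form of the statement, but the exact constants $2\tau$ and $\tfrac43\tau$ are those of Paulin's theorem (via Marton couplings or the pseudo-spectral gap), so the final step would quote that theorem to fix the absolute constants.
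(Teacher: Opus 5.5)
There is a genuine gap, and it sits exactly at the step you flag as the main obstacle. The target $V=\tau\sum_t b_t^2$ is not merely hard to reach; it is false, and so is the lemma with these constants. Because the $D_k$ are orthogonal, any almost-sure bound $\sum_k\mathbb E[D_k^2\mid\mathcal F_{k-1}]\le V$ forces $V\ge\mathbb E[Z_n^2]=\operatorname{Var}(S_n)$. Take the two-state chain on $\{-1,+1\}$ that flips with probability $0.2$ at each step, started from its uniform stationary law, with $f_t(\xi)=\xi$ and $b_t=1$. This fits Assumption~\ref{assump:4:augmix}, e.g.\ with $d=1$, a one-point $\mathcal S$, and a mask bit flipping with probability $0.2$. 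Here $\sup_{\xi_0}\|\mathcal L(\Xi_t\mid\Xi_0=\xi_0)-\pi_\Xi\|_{\mathrm{TV}}=\tfrac12(0.6)^t$, so $\tau=3$. On the other hand, $\operatorname{Var}(S_n)=\sum_{s,t<n}(0.6)^{|s-t|}=4n-7.5\,(1-0.6^n)$, which exceeds $3n=\tau\sum_t b_t^2$ for every $n\ge 8$. So no cancellation argument in Step~2 can produce $V=\tau\sum_t b_t^2$. The statement itself also fails. By the Markov-chain CLT (asymptotic variance $(1+0.6)/(1-0.6)=4$), $\mathbb P(Z_n\ge 10\sqrt n)\to 1-\Phi(5)\approx 2.9\cdot 10^{-7}$, with $\Phi$ the standard normal distribution function. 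The lemma's right-hand side at $u=10\sqrt n$ instead tends to $e^{-50/3}\approx 5.8\cdot 10^{-8}$.

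Consequently your last step, quoting Paulin to fix the constants, cannot succeed. The paper offers nothing beyond that citation, so it has the same defect. Paulin's Marton-coupling bound for Markov chains is of McDiarmid type, $\exp\bigl(-2u^2/(\|c\|_2^2\,\tau_{\min})\bigr)$, with $c_t=2b_t$ and $\tau_{\min}\le\inf_{0\le\epsilon<1}t_{\mathrm{mix}}(\epsilon/2)\bigl(\tfrac{2-\epsilon}{1-\epsilon}\bigr)^2\le\tfrac{49}{9}\,t_{\mathrm{mix}}(1/8)$. His Bernstein-type bounds are stated via a (pseudo-)spectral gap and the stationary variance. None of them yields $2\tau\sum_t b_t^2+\tfrac43 b_{\max}\tau u$.

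Two smaller steps are also unsupported. Isolating the lag-$0$ term does not improve $|D_k|\le 2b_{\max}\sum_{j\ge0}a_j=\tfrac83\tau b_{\max}$ to $2\tau b_{\max}$, since $a_j=1$ for all $j<\tau$. The blocking fallback, with a total-variation cost of $1/4$ per separation, accumulates an additive error of order $n/\tau$.

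What does work, with linear dependence on $\tau$ and no variance cancellation, is to coarse-grain your Doob martingale to the block endpoints $e_m=m\tau-1$. Given $\mathcal F_{e_{m-1}}$, the $m$-th block increment has range at most $2\sum_{t\in\text{block }m}b_t+2\sum_{j\ge1}a_j\,b_{e_m+j}$. Azuma--Hoeffding together with Cauchy--Schwarz inside blocks then gives $\mathbb P(Z_n\ge u)\le\exp\bigl(-u^2/(C\tau\sum_t b_t^2)\bigr)$ with an absolute constant $C$ (about $11$). That is a correct form of the lemma, and the main theorem goes through with correspondingly larger constants.
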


\begin{proof}[Proof of Theorem \ref{thm:main}]

\smallskip
\noindent\textit{Step 0: deterministic pointwise bound.}
By Lemma~\ref{lem:KR}, $X_t=|\Delta_t|\le C_{\max}$ and hence $0\le \gamma^t X_t\le \gamma^t C_{\max}$ for all $t$. This also implies the upper cap bound $S \le \frac{C_{max}}{1-\gamma}$. Trivially, $S\leq \mu_S + \frac{C_{max}}{1-\gamma}$.

\smallskip
\noindent\textit{Step 1: apply Bernstein to the weighted sequence.}
Let $f_t(\Xi_t):=\gamma^t X_t$. Then $|f_t|\le b_t$ with $b_t:=\gamma^t C_{\max}$ and $b_{\max}=C_{\max}$. Define $S_n:=\sum_{t=0}^{n-1}\gamma^t X_t$ and $Z_n:=S_n-\mathbb E[S_n]$. By Lemma~\ref{lem:bernstein},
\[
\mathbb P\!\left(Z_n\ge u\right)\ \le\ \exp\!\left(
-\frac{u^2}{\,2\tau\sum_{t=0}^{n-1}\gamma^{2t}C_{\max}^2+\frac{4}{3}C_{\max}\tau\,u}\right).
\]

\smallskip
\noindent\textit{Step 2: infinite-horizon limit.}
Since $\sum_{t=0}^{\infty}\gamma^{2t}=\frac{1}{1-\gamma^2}$, letting $n\to\infty$ and using monotone convergence yields
\[
\mathbb P\!\left(S-\mu_S\ge u\right)\ \le\ \exp\!\left(
-\frac{u^2}{\,2\tau\,C_{\max}^2/(1-\gamma^2)+\frac{4}{3}C_{\max}\tau\,u}\right).
\]

\smallskip
\noindent\textit{Step 3: solve the quadratic tail.}
Set the right-hand side to $\delta/2$ and solve the Bernstein quadratic inequality for $u$.
A standard inversion gives the convenient choice
\[
u\ =\ C_{\max}\sqrt{\frac{2\tau}{1-\gamma^2}\,\ln\frac{2}{\delta}}\ +\ \frac{4}{3}\,C_{\max}\,\tau\,\ln\frac{2}{\delta},
\]
which implies $\mathbb P\big(S-\mu_S\ge u\big)\le \delta/2$ (an analogous lower-tail bound also holds, but we only need the upper tail).

\smallskip
\noindent\textit{Step 4: bound the mean via independence.}
By Lemma~\ref{lem:KR},
\[
\mu_S\ =\ \mathbb E\!\left[\sum_{t\ge 0}\gamma^t X_t\right]
\ \le\ \frac{L_Q L_\pi}{1-\gamma}\sum_{t\ge 0}\gamma^t\,
\mathbb E\!\left[\sum_{i=1}^d (1-M_{t,i})\,|h_i(S_t)|\right].
\]
Under stationarity and Assumption~\ref{assump:5:indep},
$\mathbb E[(1-M_{t,i})\,|h_i(S_t)|]=(1-\pi_{x,i})\,h_i$,
hence $\mu_S\le \frac{L_Q L_\pi}{1-\gamma}\sum_{i=1}^d (1-\pi_{x,i})\,h_i$.
Combining Steps 1–3 with this mean bound yields the stated inequality.
\end{proof}

\begin{remark}[Signed version]
Defining $S_\Delta:=\sum_{t\ge 0}\gamma^t \Delta_t$ and repeating the proof with $\Delta_t$ (instead of $|\Delta_t|$) gives the same deviation term and a two-sided bound for $S_\Delta$ about its mean. Since $|S_\Delta|\le S$, the result in Thm. \ref{thm:main} is a conservative degradation guarantee.
\end{remark}


\subsection{GRU and LRU Specifics}  

\paragraph{GRU (RNN).} \citep{cho2014}
A GRU is obtained by letting $h_t \in \mathbb{R}^{d}$ and defining gate-controlled nonlinear updates:
\begin{align*}
r_t &= \sigma(W_r x_t + U_r h_{t-1}), \\
u_t &= \sigma(W_u x_t + U_u h_{t-1}), \\
\tilde{h}_t &= \tanh(W_h x_t + U_h (r_t \odot h_{t-1})), \\
h_t &= (1-u_t)\odot h_{t-1} + u_t \odot \tilde{h}_t, \\
z_t &= h_t,
\end{align*}
optionally wrapped with residual connections and gating blocks in a stacked encoder.

\paragraph{LRU (diagonal linear SSM).} \citep{orvieto2023}
An LRU-style SSM can be expressed by choosing $h_t$ in a transformed (complex) latent space and using a linear recurrence with diagonal dynamics:
\begin{equation*}
h_t = \Lambda h_{t-1} + B x_t,
\end{equation*}
where $\Lambda$ controls retention/decay (analogous to a learned, state-dependent ``carry'' behavior), and $z_t$ is produced by a learned readout followed by stabilizing blocks (e.g., residual/GLU/normalization) in a stacked encoder.


\subsection{Training Details}

\subsubsection{RNN/SSM Integration}
\paragraph{Episode boundary handling.}
To prevent leakage of information across episodes, we reset memory when an environment terminates:
\begin{equation*}
h_{t-1} \leftarrow (1-d_t)\, h_{t-1},
\end{equation*}
before applying the update in Eq.~\ref{eq:generic_encoder}.
This reset is applied during both rollout collection and training-time unrolling. 


\paragraph{Truncated backpropagation and burn-in.}
During PPO updates, we unroll $\mathcal{E}_\psi$ over short subsequences.
Optionally, we use a \emph{burn-in} prefix to update hidden state without gradients and then compute PPO losses on the subsequent segment using the warmed state:
\begin{equation*}
h_{\tau} \leftarrow \mathcal{E}_\psi(h_0, x_{1:\tau}; d_{1:\tau}) \quad (\text{no grad}), \qquad
\text{loss on } x_{\tau+1:\tau+K}.
\end{equation*}
This is implemented for our recurrent agents to ensure hidden states update alongside model weights each epoch. 

\subsubsection{Hyperparameters and Training Curves}
\label{sec:hyperparameters}

\begin{table}[h!]
    \centering
    \caption{Proximal Policy Optimization (PPO) Hyperparameters}
    \label{tab:ppo_hyperparameters}
    \begin{tabular}{ll}
        \toprule
        \textbf{Hyperparameter} & \textbf{Value} \\
        \midrule
        Total timesteps        & $1 \times 10^{6}$ \\
        Learning rate          & $3 \times 10^{-4}$ \\
        Number of steps        & 2048 \\
        Discount factor ($\gamma$) & 0.99 \\
        GAE parameter ($\lambda$)  & 0.95 \\
        Number of minibatches  & 32 \\
        Update epochs          & 10 \\
        Clipping coefficient   & 0.2 \\
        Entropy coefficient    & 0.0 \\
        Value function coefficient & 0.5 \\
        Maximum gradient norm  & 0.5 \\
        \bottomrule
    \end{tabular}
\end{table}

\begin{table}[h!]
\centering
\caption{Architectural Hyperparameters per Model}
\label{tab:arch_common_extras}
\setlength{\tabcolsep}{5pt}
\begin{tabular}{@{}lccccp{0.38\linewidth}@{}}
\toprule
\textbf{Model} & \textbf{Hidden} & \textbf{Layers} & \textbf{Heads} & \textbf{Dropout} & \textbf{Additional} \\
\midrule
Transformer            & 128 & 2 & 2 & 0.1 & -- \\
Gated TransformerXL    & 128 & 2 & 2 & 0   & Memory length 8 \\
UniTS                  & 128 & 2 & 2 & 0.1 & Patch length 1; Prompt tokens 6 \\
GRU                    & 128 & 4 & --& 0   & -- \\
LRU                    & 128 & 4 & --& 0   & $r_{\min}$ 0.9; $r_{\max}$ 0.999; Max phase 6.28 \\
MLP                    & 128  & --& --& --  & -- \\
\bottomrule
\end{tabular}
\end{table}

\begin{figure}[htbp!]
    \centering
    \includegraphics[width=\linewidth]{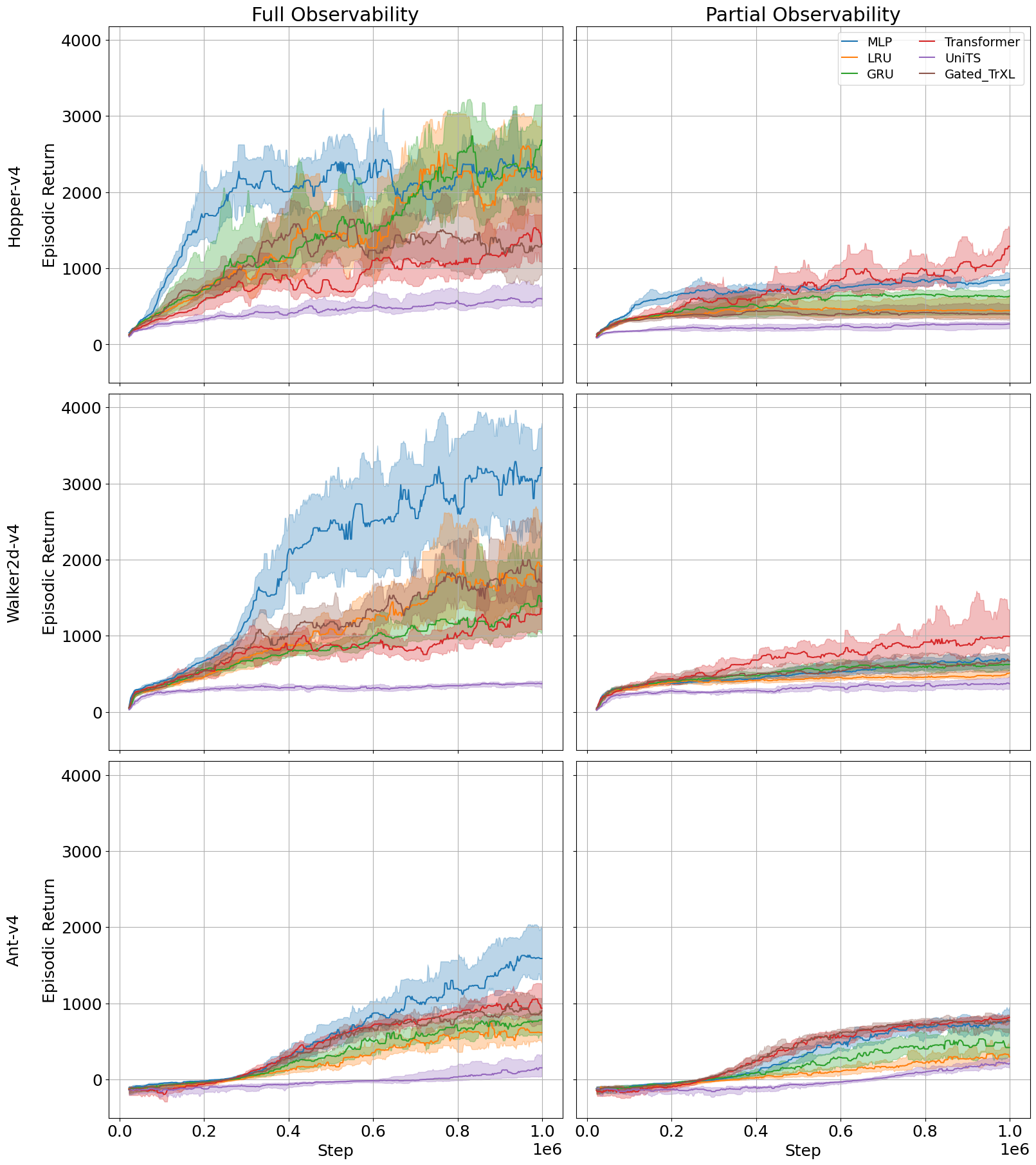}
    \caption{PPO training curves for Hopper-v4, Walker2d-v4, and Ant-v4 under full (left) and ~60\% partial (right) observability. Lines represent median episodic return and shaded regions denote inter-quartile ranges across 8 random seeds.}
    \label{fig:training_curves_appendix}
\end{figure}

\end{document}